\newtheorem{theorem}{Theorem}[section]
\newtheorem{prop}[theorem]{Proposition}
\newtheorem{problem}{Problem}
\newtheorem{definition}[theorem]{Definition}
\newtheorem{rem}[theorem]{Remark}
\newtheorem{ex}[theorem]{Example}
\newcommand{\vertiii}[1]{{\left\vert\kern-0.25ex\left\vert\kern-0.25ex\left\vert #1
    \right\vert\kern-0.25ex\right\vert\kern-0.25ex\right\vert}}
\title{\LARGE \bf
Verified Compositions of Neural Network Controllers \\ for Temporal Logic Control Objectives
}
\author{Jun Wang, Samarth Kalluraya and Yiannis Kantaros% <-this % stops a space
\thanks{The authors are with the Department of Electrical and Systems Engineering, Washington University in St. Louis, St. Louis, MO. Email: {\tt\small \{junw,k.samarth,ioannisk\}@wustl.edu}}%
}
\begin{document}
\maketitle
\thispagestyle{empty}
\pagestyle{empty}

\begin{abstract}
This paper presents a new approach to design verified compositions of Neural Network (NN) controllers for autonomous systems with tasks captured by Linear Temporal Logic (LTL) formulas. Particularly, the LTL formula requires the system to reach and avoid certain regions in a temporal/logical order. We assume that the system is equipped with a finite set of trained NN controllers. Each controller has been trained so that it can drive the system towards a specific region of interest while avoiding others. Our goal is to check if there exists a temporal composition of the trained NN controllers - and if so, to compute it -  that will yield composite system behaviors that satisfy a user-specified LTL task for any initial system state belonging to a given set. To address this problem, we propose a new approach that relies on a novel integration of automata theory and recently proposed reachability analysis tools for NN-controlled systems. We note that the proposed method can be applied to other controllers, not necessarily modeled by NNs, by appropriate selection of the reachability analysis tool. We focus on NN controllers due to their lack of robustness. The proposed method is demonstrated on navigation tasks for aerial vehicles.
\end{abstract}
%\vspace{-0.2cm}
\section{INTRODUCTION}
\vspace{-0.1cm}
%Recently, neural networks (NNs) have been successfully applied to many challenging tasks ranging from image processing\cite{He_2016_CVPR} to designing controllers for autonomous systems \cite{bansal2020combining} 
% and aircraft collision avoidance\cite{julian2019verifying} 
%with significant performance benefits. In autonomous systems, NNs are typically used as feedback controllers, motion planners, or perception modules.
%— There has been an increasing interest in using neural networks in closed-loop control systems to improve performance and reduce computational costs for on-line implementation.
%Recent advances in Neural Networks (NNs) coupled with technological advances in sensing, computation, and control have resulted in major achievements in autonomous systems \cite{grigorescu2020survey,lin2021deep} wherein NNs are often used as feedback controllers, motion planners, or perception modules.
Several methods have been proposed recently to train Neural Network (NN) controllers for autonomous systems. Such training methods include e.g., deep reinforcement learning (RL) \cite{gao2019reduced} and model predictive control (MPC) \cite{rubies2019classification}.
Despite the high real-time performance of NN-driven systems, they typically lack safety and robustness guarantees as underscored by recent studies \cite{huang2017adversarial}. %To address this limitation several safe reinforcement learning (RL) methods have been proposed that %\cite{goodfellow2015explaining,sun2021vulnerability, tesla2016accident, uber2018accident}.
%
% To address this limitation, several safe Reinforcement Learning (RL) methods that aim to ensure safety during either training \cite{carr2022safe} or deployment \cite{Bastani2021SafeRL} based on shielding.
% %
% %The main idea in these works is to compute a backup policy (called shield) that overrides the nominal control policy when the system is about to violate safety constraints. 
% Although safety is ensured in these works, liveness/performance guarantees are not considered. 
%
% To address this limitation, formal methods have been leveraged recently to design RL controllers %\cite{alshiekh2018safe, gao2019variance, hasanbeig2019reinforcement, bouton2019reinforcement, bozkurt2020control, cai2021reinforcement,kantaros2022accelerated}. 
% \cite{cai2021reinforcement,kantaros2022accelerated}. 
% %
% % Particularly, the main goal in these works is to learn a safe controller that maximizes the probability of satisfying complex properties captured by formal languages such as Linear Temporal Logic (LTL) \cite{kantaros2018text, wang2021verifying}.
% %
% Nevertheless, the designed controllers are not supported by any safety or robustness guarantees when function approximations, such as NNs, are used to model them.
%
To address this limitation, various methods have been proposed to verify robustness properties of trained NN controllers  \cite{fazlyab2020safety, Dutta2017OutputRA}. 
For instance, \cite{Dutta2017OutputRA} addresses the problem of output range analysis of trained NNs given an input set. 
Safety verification of dynamical systems with feedback NN controllers has been studied as well in \cite{huang2019reachnn, sun2019formal, Hu2020ReachSDPRA, Tran2020NNVTN, dutta2019reachability, ivanov2021verisig,sun2022formal} and the references therein.
Typically, these methods investigate if a dynamical system with a trained NN controller can satisfy a  reach-avoid property given a set of possible initial states. %For instance, %\cite{ivanov2021verisig} considers hybrid systems driven by feedforward NN controllers with sigmoid or tanh activation functions. Similarly,
%\cite{Hu2020ReachSDPRA} considers discrete-time linear time-varying systems with NN feedback controllers with arbitrary non-linear activation functions. 
% 
%Common in \cite{huang2019reachnn, sun2019formal, Hu2020ReachSDPRA, Tran2020NNVTN, dutta2019reachability, ivanov2021verisig, ivanov2020verifying, tran2019safety,sun2022formal} 
%is that they focus on reach-avoid properties, i.e., they aim to verify that the system can eventually reach a desired state while avoiding unsafe states given a set of initial states. 

Common in all the above works is that  they focus on learning or verifying the efficiency/safety of a \textit{single} NN controller for a given task. However, the sample complexity and computational cost of learning a \textit{single} NN controller increases drastically as the task complexity increases. Motivated by these limitations, compositional RL methods have been proposed recently that aim to learn a set of base NN controllers which are then composed to satisfy 
a complex task captured by temporal logics; see e.g., \cite{jothimurugan2021compositional,tasse2022skill,neary2022verifiable}. The key idea in these works is to decompose the task into simpler sub-tasks for which NN controllers can be learned, using RL, more efficiently. Then, these NN controllers are composed to satisfy the original task. Nevertheless, the resulting controllers often either lack safety guarantees or the provided guarantees are impractical (e.g., lower bounds on satisfaction probability) for safety-critical applications; also, these guarantees are typically specific to a fixed and given initial system state. To address this issue, in this paper we propose a new method to design \textit{verified} temporal compositions of trained NN controllers for temporal logic tasks.
Related is also the recent work in \cite{ivanov2021composelearning} which, however, unlike the above papers and ours, does not consider temporal logic tasks. For instance, in \cite{ivanov2021composelearning}, the sub-tasks are revealed by the environment while in \cite{jothimurugan2021compositional,tasse2022skill,neary2022verifiable}
the sub-tasks are `strategically' selected to satisfy a temporal logic task. %captured by formal languages. 

%\vspace{-1cm}
Specifically, in this paper, we consider autonomous systems tasked with complex high-level missions captured by a fragment of Linear Temporal Logic (LTL), called co-safe LTL \cite{baier2008principles}.  We assume that the system is governed by discrete-time linear time-varying dynamics and that the LTL task requires reaching and avoiding certain regions in a temporal/logical order. Also, the system has access to a finite set of already trained controllers modeled as NNs. Each controller is trained so that it can drive the system towards a specific region of interest while avoiding others. We do not make any assumptions about how these NNs have been trained; for instance they may have been trained using RL or MPC-based methods. Our goal is to check if there exists a temporal composition of these NN controllers - and if so, to compute it -  that will yield composite system behaviors that always satisfy a user-specified LTL task for any initial system state belonging to a given set.
%if there exists verified temporal compositions of these trained NN controllers so that the resulting closed-loop system satisfies the LTL specification for any initial state belonging to a given set. 
To address this problem, we leverage automaton representations of LTL formulas as well as graph-search methods and existing reachability analysis for NN-driven systems \cite{Hu2020ReachSDPRA}. 
%
% Particularly, inspired by recent compositional temporal logic planning methods \cite{bisoffi2018hybrid, kantaros2020reactive}, we use the automaton representation of the LTL formulas and graph-search methods allows us to decompose the LTL task into sequences of reach-avoid tasks. Then, we employ recently proposed reachability methods\cite{Hu2020ReachSDPRA} to check if there exists at least one temporal composition of the base controllers that can verifiably satisfy the sub-tasks captured by at least one sequence of tasks. 
%
%To the best of our knowledge, this is the first work that designs verified compositions of NN controllers that yield system behaviors satisfying complex and long-term tasks captured by formal languages.
%
We note that our approach can handle any other open-loop or feedback controllers that are not necessarily modeled as NNs by appropriate selection of the reachability analysis method.
%verifies LTL properties for dynamical systems with NN controllers.
%
In this paper we focus on NN controllers due to their fragility to  imperceptible input perturbations \cite{huang2017adversarial}. 
%
%Yet, our proposed method can be coupled with any controllers by appropriate selection of reachability analysis methods.

%\vspace{-0.1cm}
\textbf{Contributions:} \textit{First}, we propose a new approach to design verified temporal compositions of NN controllers for co-safe LTL tasks. \textit{Second}, we show correctness of the proposed method and discuss trade-offs between completeness and computational efficiency. \textit{Third}, we demonstrate the efficiency of the proposed approach on several navigation tasks that involve aerial vehicles.

\vspace{-0.1cm}
\section{Problem Formulation}\label{sec:problem}
\vspace{-0.1cm}
\textbf{Closed-loop system:} 
We consider discrete-time linear systems defined as follows:
%Consider the following discrete-time systems:
\begin{equation}\label{eq:dynamics}
    \bbx_{t+1} =\bbf(\bbx_t, \bbu_t)=\bbA_t\bbx_t + \bbB_t\bbu_t+\bbc_t,
\end{equation}
where $\bbx_t\in\ccalX\subseteq\mathbb{R}^d$ and $\bbu_t\in\ccalU_t\subseteq\mathbb{R}^n$ denote the state and the control input of the system at time $t\geq0$, respectively. Also, $\bbA_t\in\mathbb{R}^{d\times d}, \bbB_t\in\mathbb{R}^{d\times n}$ are the system matrices while $\bbc_t\in\mathbb{R}^d$ is an exogenous input. We assume that state-space $\ccalX$ contains $N>0$ sub-spaces denoted by $\ell_i \subset \ccalX$ modelling regions of interest or unsafe areas. %\footnote{The assumption of disjoint sub-spaces can be relaxed straightforwardly; see Section \ref{sec:prune}. %Also, as it will be shown in Section , having disjoint sub-spaces can reduce the computational effort required for verification.} 
%Such sub-spaces can model regions of interest or unsafe areas.  
Also, we assume that at any time $t$ the system can apply  control inputs selected from a finite set of feedback controllers collected in the set $\Xi=\{\xi_{i}\}_{i=1}^N$, where $\xi_{i}(\bbx_t): \ccalX\to\mathbf{R}^n$ maps system states to control actions. \textit{We assume that the controller $\xi_{i}$ is selected by the system when, given any initial state in $\ccalX$, the system state  $\bbx_t$ needs to be driven towards the interior of $\ell_i$.}
We consider cases where the controllers $\xi_{i}(\bbx_t)$ are  parameterized by multi-layer feed-forward fully-connected neural networks (NNs). Such NN controllers can be implemented using available methods; see e.g., \cite{chen2022large}. %\footnote{\textcolor{red}{[if we have space, we can describe in math the structure of such NN. otherwise, a reference would suffice]}} 
Hereafter, with slight abuse of notation, we denote by $\xi(t)$ the controller selected from $\Xi$ at time $t$.
To ensure that NN output respects the input constraint, we consider a projection operator, denoted by $\text{Proj}_{\ccalU_t}$, and define the control input as $\bbu_t=\text{Proj}_{\ccalU_t}\xi(t)$. We denote the closed-loop system with dynamics \eqref{eq:dynamics} and the projected NN control policy as:
%
%NN-based closed loop system at time $t$ can be defined as: %$\bbx_{t+1}=\bbf(\bbx_t, \text{Proj}_{\ccalU_t}\xi(t))$.
\begin{equation}\label{eq:closedloop}
    \bbx_{t+1}=\bbf_{\xi}(\bbx_t)
    %\bbf(\bbx_t, \text{Proj}_{\ccalU_t}\xi(t)):
\end{equation}
Next, we define a high-level NN-based control strategy $\boldsymbol\xi$ as a \textit{temporal composition} of the controllers in $\Xi$.
\begin{definition}[Control Strategy]\label{eq:NNcontrol}
A NN-based control strategy $\boldsymbol\xi$ is defined as a finite sequence of NN controllers selected from $\Xi$, i.e., $\boldsymbol\xi=\mu(0),\mu(1),\mu(2),\dots,\mu(K)$, for some finite $K>0$, where $\mu(k)\in\Xi$, for all $k\in\{0,1,\dots,K\}$, and $\mu(k)$ is applied for a finite horizon $H_k$. 
\end{definition}
We note again that $\mu(k)$ is a feedback controller from $\Xi$. For instance, if $\mu(0)=\xi_i$, for some $i\in\{1,\dots,N\}$, then the system applies the controller $\xi_i(\bbx_t)$, $\forall t\in\{0,1,\dots,H_0\}$. Given $\boldsymbol\xi$ and an initial state $\bbx_0$, the corresponding closed-loop system \eqref{eq:closedloop} generates a finite sequence of system states, denoted by $\tau(\bbx_0)=\bbx_0,\bbx_1,\dots,\bbx_t,\dots, \bbx_F$, where $F=\sum_{k=0}^KH_k$.

%\subsection{Linear Temporal Logic Properties}
\textbf{Linear Temporal Logic Properties:}  We define mission and safety properties for the system \eqref{eq:dynamics} using Linear Temporal Logic (LTL) as it allows to specify a wide range of high-level tasks  \cite{leahy2016persistent,kantaros2018text}. LTL consists of atomic propositions (i.e., Boolean variables), denoted by $\mathcal{AP}$, Boolean operators, (i.e., conjunction $\wedge$, and negation $\neg$), and two temporal operators, next $\bigcirc$ and until $\mathcal{U}$. LTL formulas over a set $\mathcal{AP}$ can be constructed based on the following grammar: $\phi::=\text{true}~|~\pi~|~\phi_1\wedge\phi_2~|~\neg\phi~|~\bigcirc\phi~|~\phi_1~\mathcal{U}~\phi_2$, where $\pi\in\mathcal{AP}$. For brevity we abstain from presenting the derivations of other Boolean and temporal operators, e.g., \textit{always} $\square$, \textit{eventually} $\lozenge$, \textit{implication} $\Rightarrow$, which can be found in \cite{baier2008principles}. 
Hereafter, we define the set $\mathcal{AP}$ as $\mathcal{AP}=\cup_i\{\pi^{\ell_i}\}$, where $\pi^{\ell_i}$ is an atomic predicate that is true when the system state $\bbx_t$ is within region $\ell_i$.
We restrict our attention to co-safe LTL properties that exclude the use of the `always' operator. Co-safe LTL formulas are satisfied by discrete finite plans $\tau$ defined as finite sequences of system states $\bbx_t\in\ccalX$. i.e., $\tau(\bbx_0)=\bbx_0,\bbx_1,\dots,\bbx_t,\dots, \bbx_F$, where $F>0$ denotes a finite horizon \cite{baier2008principles}. Given $\boldsymbol\xi$ and an initial state $\bbx_0$, we say that the closed-loop system \eqref{eq:closedloop} satisfies $\phi$, denoted by $\bbf_{\xi}\models\phi$, if \eqref{eq:closedloop} generates a sequence $\tau(\bbx_0)$ that satisfies $\phi$. %Satisfaction of an LTL formula $\phi$ by $\tau$ is denoted by $\tau\models\phi$, where $\models$ is a satisfaction relation \cite{baier2008principles}. 

%with the following structure
%\begin{equation}\label{eq:phi}
%    \phi=\phi_{\text{mission}}\wedge\phi_{\text{safe}},
%\end{equation}
%where (i) $\phi_{\text{mission}}$ is a co-safe LTL formula, that excludes the use of the `always' operator; and (ii) $\phi_{\text{safe}}$ is an LTL formula requiring all-time avoidance of the unsafe state-space $O$, i.e., $\phi_{\text{safe}}=\square\neg \pi^O$.

\begin{ex}
Examples of co-safe LTL specifications follow: (i) $\phi=\Diamond(\pi^{\ell_1})\wedge (\neg \pi^{\ell_2}\ccalU \pi^{\ell_1})$ captures a common reach-avoid property requiring the system to eventually reach the region of interest $\ell_1$ while avoiding in the meantime the unsafe region $\ell_2$; (ii) $\phi=(\Diamond\pi^{\ell_1})\wedge (\Diamond\pi^{\ell_2}) \wedge (\Diamond\pi^{\ell_3})\wedge(\Diamond\pi^{\ell_4})\wedge (\neg \pi^{\ell_4} \ccalU \pi^{\ell_1})$ requires the system to eventually reach the regions $\ell_1$, $\ell_2$, $\ell_3$, and $\ell_4$, in any order, as long as $\ell_4$ is avoided until region $\ell_1$ is reached.
\end{ex}

\begin{problem}\label{pr1}%[Verification Problem]
\textit{Given} (i) a set of initial states $\ccalX_0\subseteq\ccalX$; (ii) the system dynamics \eqref{eq:dynamics}; (iii) a co-safe LTL property $\phi$; and (iv) a finite set $\Xi$ of NN controllers, check if there exists a NN-based control strategy $\boldsymbol\xi$ (Definition \ref{eq:NNcontrol}), so that $\bbf_\xi\models\phi$, for all $\bbx_0\in\ccalX_0$; if there exists such a $\boldsymbol\xi$, compute it as well.
\end{problem}

\begin{rem}[Controllers for LTL tasks]
Several reinforcement learning (RL) methods have been proposed that can train a \textit{single} controller, that can be parameterized by a NN, to satisfy an LTL task $\phi$; see e.g., \cite{gao2019reduced,hahn,kantaros2022accelerated,bozkurt2020control} and the references therein. Verification of a single NN controller (as opposed to the set of controllers considered here) 
with respect to an LTL formula can be accomplished by existing reachability analysis tools for discrete-time systems; see e.g., \cite{Hu2020ReachSDPRA}. Particularly, a sequence of reachable sets, capturing all possible system states at future time instants $t$, needs to be computed under the considered NN controller and a set of initial states $\ccalX_0$. Then, it suffices to check if the atomic predicates that are satisfied across this sequence construct a word that can be accepted by an automaton of $\phi$ \cite{baier2008principles}. %In particular, it suffices to convert $\phi$ into an automaton and the check if this automaton accepts $w$ \cite{baier2008principles}.
\end{rem}

\vspace{-0.2cm}
\section{Verified Compositions of NN Controllers for co-safe LTL Tasks}
% \textcolor{red}{[Add a summary of the method first (short paragraph)]}
% \textcolor{red}{[Add an algorithmic step-by-step description of the proposed tool; we should refer to specific lines of the algorithm in the text ]}

\vspace{-0.1cm}
%We propose a new method to verify co-safe LTL properties for NN-based systems.
% see Alg. \ref{algo1}.
%
Our approach to solve Problem \ref{pr1} consists of the following steps.
First, we translate the LTL formula $\phi$ into a Deterministic Finite state Automaton (DFA); see Section \ref{sec:DFA}. %Then, we prune the DFA to reduce the computational effort required for verification; see Section \ref{sec:prune}. 
Second, by leveraging the DFA, we decompose $\phi$ into reach-avoid sub-tasks; see Sections \ref{sec:reachAvoid}-\ref{sec:VerReach}. Then, we apply graph-search methods combined with reachability analysis to check if there exists $\boldsymbol\xi$ so that $\bbf_\xi\models\phi$, for all $\bbx_0\in\ccalX_0$; see Section \ref{sec:vltl}. Trade-offs between completeness and computational efficiency are discussed in Section \ref{sec:complete}.

% \begin{algorithm}[h]
% \footnotesize
% \caption{Verified Composition of NN Controllers}\label{algo1}
% \begin{algorithmic}[1]
% \State \textbf{Input}: Formula $\phi$; System Dynamics \eqref{eq:dynamics}; Controllers $\Xi$\;
% \State Translate $\phi$ into DFA $D$\; (Section \ref{sec:DFA})
% %\State Prune DFA $D$\; (see Section \ref{sec:prune})
% \State Using $D$, decompose $\phi$ into reach-avoid properties\; (Sections \ref{sec:reachAvoid}-\ref{sec:VerReach})
% \State Apply graph-search \& reachability analysis over the DFA state-space \label{a_dfs} (Section \ref{sec:vltl}-Alg. \ref{algo2})
% \State \textbf{Output}: Verification result  $R\in\{[\texttt{True},\boldsymbol\xi],\texttt{False}\}$\;
% \end{algorithmic}
% \end{algorithm}%\vspace{-0.5cm}
% \normalsize

\vspace{-0.2cm}
\subsection{From LTL formulas to DFA}\label{sec:DFA}
\vspace{-0.2cm}

First, we translate  $\phi$, constructed using $\mathcal{AP}$, into a DFA defined as follows \cite{baier2008principles}; see also Fig. \ref{fig:example}.%
\begin{definition}[DFA]
A Deterministic Finite state Automaton (DFA) $D$ over $\Sigma=2^{\mathcal{AP}}$ is defined as a tuple $D=\left(\ccalQ_{D}, q_{D}^0,\Sigma,\delta_D,q_D^F\right)$, where $\ccalQ_{D}$ is the set of states, $q_{D}^0\in\ccalQ_{D}$ is the initial state, $\Sigma$ is an alphabet, $\delta_D:\ccalQ_D\times\Sigma\rightarrow\ccalQ_D$ is a deterministic transition relation, and $q_D^F\in\ccalQ_{D}$ is the accepting/final state. 
\end{definition}

\begin{figure}[h]
\centering
\begin{subfigure}[t]{0.48\linewidth}
\centering
  \includegraphics[width=0.8\linewidth]{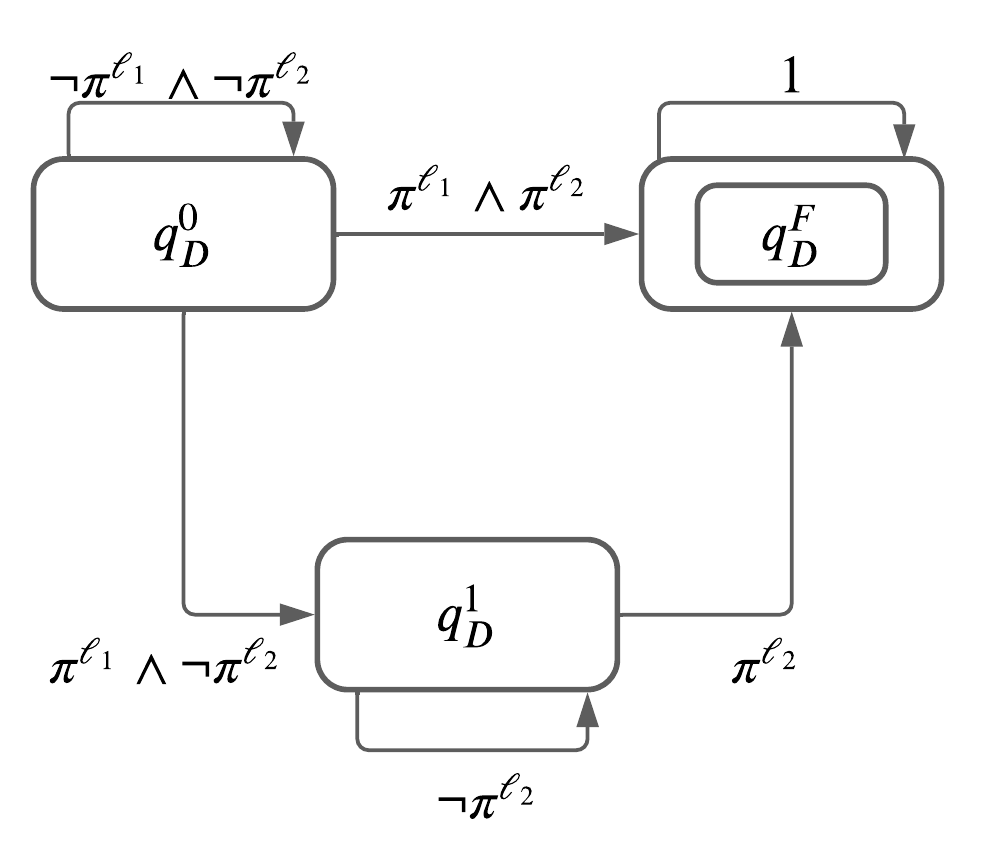}
    \caption{DFA}
    \label{fig:example}
\end{subfigure}
\hfill
\begin{subfigure}[t]{0.48\linewidth}
\centering
   \includegraphics[width=0.8\linewidth]{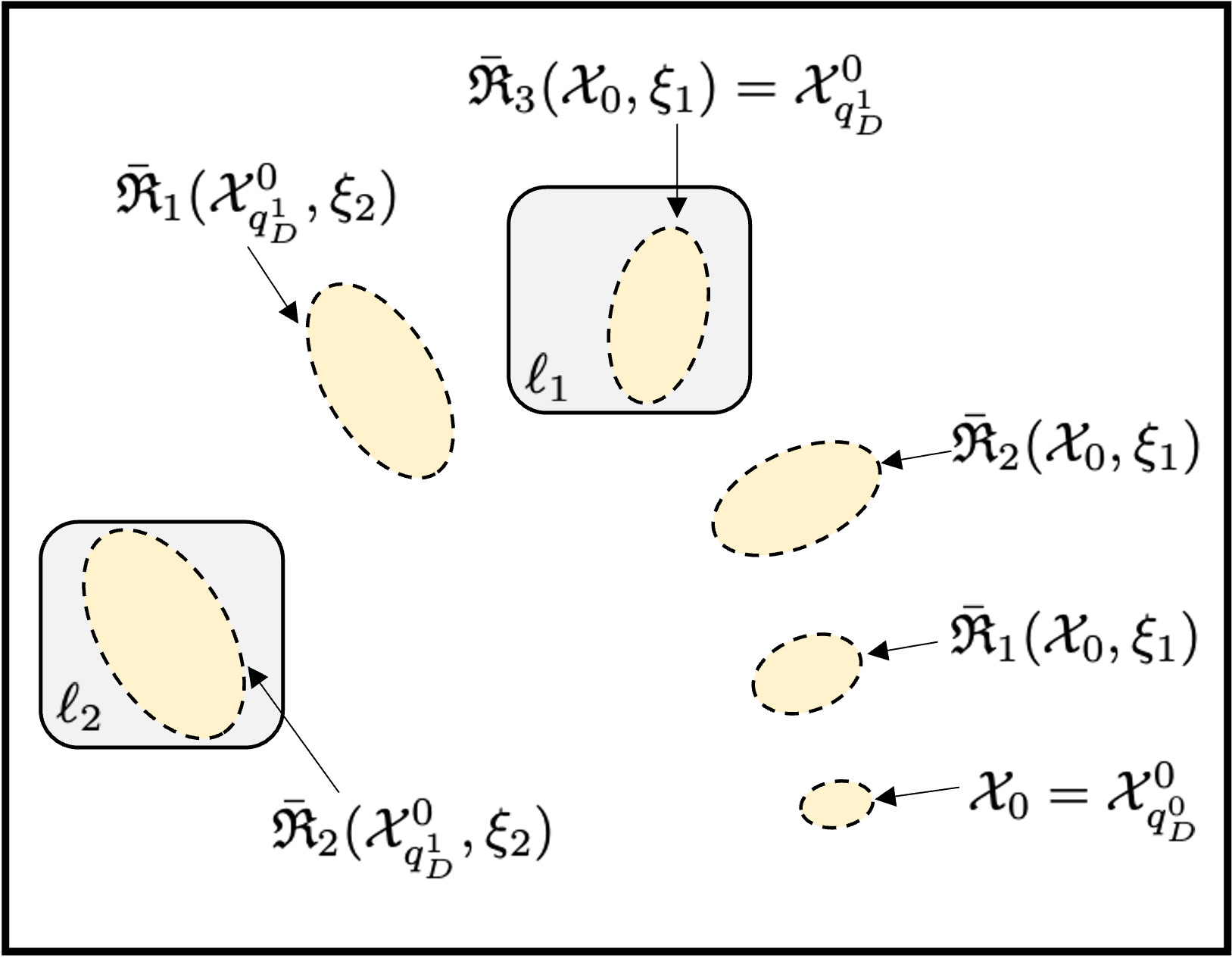}
    \caption{Reachable sets}
    \label{fig:reach}
\end{subfigure}
\caption{Fig \ref{fig:example} shows the DFA corresponding to $\phi=\Diamond(\pi^{\ell_2}) \wedge (\neg \pi^{\ell_2} \ccalU \pi^{\ell_1})$. Fig. \ref{fig:reach} illustrates the reachability analysis over the DFA state space (see Section \ref{sec:vltl}).}
\label{fig:case1}\vspace{-0.5cm}
\end{figure}

To interpret a temporal logic formula over a sequence $\tau(\bbx_0)$ generated by \eqref{eq:closedloop}, we use a labeling function $L:\ccalX\rightarrow 2^{\mathcal{AP}}$ that maps system states to symbols $\sigma\in2^{\mathcal{AP}}$. A finite sequence of states $\tau(\bbx_0)=\bbx_0,\bbx_1,\dots, \bbx_F$ satisfies $\phi$ if the \textit{word} $w=L(\bbx_0)L(\bbx_1)\dots L(\bbx_F)$ yields an accepting DFA run, i.e., if starting from the initial state $q_D^0$, each element in $w$ yields a DFA transition so that the final state $q_D^F$ is reached \cite{baier2008principles}.  Note that a DFA can be constructed using existing tools such as \cite{fuggitti-ltlf2dfa}.

\vspace{-0.2cm}
\subsection{From DFA to Reach-Avoid Properties}\label{sec:reachAvoid}
\vspace{-0.2cm}

Given any DFA state $q_D\in\ccalQ_D$, we compute a set $\ccalR_{q_D}$ that collects all DFA states that can be reached, in one hop, from $q_D$ using a symbol $\sigma$. In math, we have:
\vspace{-0.1cm}
\begin{equation}\label{eq:R}
    \ccalR_{q_D}=\{q_D'~|~q_D'=\delta_D(q_D,\sigma), \sigma\in\Sigma\}.
\end{equation}
Then, given $q_D$ and for each  DFA state $q_D'\in\ccalR_{q_D}$, we introduce the following definitions. We construct a set that collects the states $\bbx\in\ccalX$, so that if the system state coincides with one of these states, then a symbol $\sigma=L(\bbx)\in\Sigma$ enabling this DFA transition will be generated. We collect these states in the set $\ccalX_{q_D\rightarrow q_D'}$, i.e.,
\vspace{-0.1cm}
\begin{equation}\label{eq:X}
    \ccalX_{q_D\rightarrow q_D'} =\{\bbx\in\ccalX ~|~q_D'=\delta_D(q_D,L(\bbx))\}
\end{equation}
%\vspace{-0.1cm}
In what follows, for simplicity, we assume that for all $q_D\in\ccalQ_D$ there exists a feasible self-loop around every DFA state $q_D$; later, in Section \ref{sec:VerReach}, we relax this assumption. Starting from any system state $\bbx\in\ccalX$ and a DFA state $q_D$, transition from $q_D$ to $q_D'\in\ccalR_{q_D}\setminus q_D$ will eventually occur after $H_{q_D}\geq0$ discrete time steps, if (i) the system state $\bbx_t$ remains within $\ccalX_{q_D\rightarrow q_D}$ for the next $H_{q_D}-1$ steps, and (ii) at time $t=H_{q_D}$, we have that $\bbx_{H_{q_D}}\in\ccalX_{q_D\rightarrow q_D'}$. Essentially, (i)-(ii) model a \textit{reach-avoid} requirement. 
For instance, (i) may require a robot to stay within the obstacle-free space and (ii) may require a robot to eventually enter a region.

\begin{ex}[Reach-Avoid Properties]\label{ex2}
%\textcolor{red}{preferably, use inline equations here and use the notations I just introduced above}
Consider the DFA in Fig. \ref{fig:example}. We have that $\ccalR_{q_D^0}=\{q_D^0,q_D^1,q_D^F\}$. Also, we have that $\ccalX_{q_D^0\rightarrow q_D^0}=\ccalX\setminus(\ell_1\cup\ell_2)$, $\ccalX_{q_D^0\rightarrow q_D^1}=(\ccalX\setminus\ell_2)\cap\ell_1$, $\ccalX_{q_D^0\rightarrow q_D^F}=\ell_2\cap\ell_1$. Similarly, we have that $\ccalR_{q_D^1}=\{q_D^1,q_D^F\}$, $\ccalX_{q_D^1\rightarrow q_D^1}=\ccalX\setminus\ell_2$, and $\ccalX_{q_D^1\rightarrow q_D^F}=\ell_2$. 
\end{ex}

\subsection{Verifying Reach-Avoid Properties}\label{sec:VerReach}
\vspace{-0.1cm}
In what follows, we discuss how to check whether a transition from a DFA state $q_D$ to $q_D'\neq q_D$ can be enabled; later, we will discuss how this can be used to verify LTL properties. Specifically, we want to verify that given an initial set of system states associated with $q_D$, denoted by $\ccalX_{q_D}^0$, the previously discussed conditions (i)-(ii) can be satisfied; the detailed construction of $\ccalX_{q_D}^0$ will be discussed in Section \ref{sec:vltl}. 
Notice $\ccalX_{q_D\rightarrow q_D'}$ may contain more than one region of interest $\ell_i$. As discussed in Section \ref{sec:problem}, for each region of interest $\ell_i$ in $\ccalX_{q_D\rightarrow q_D'}$, the system selects the corresponding NN controller $\xi_i\in\Xi$. Hereafter, we collect all NN controllers associated with $\ccalX_{q_D\rightarrow q_D'}$ in a set denoted by $\Xi_{q_D\rightarrow q_D'}\subseteq\Xi$.\footnote{In case $\ccalX_{q_D\rightarrow q_D'}$ does not contain any region $\ell_i$, then $\Xi_{q_D\rightarrow q_D'}=\emptyset$ by definition of $\Xi$; see Ex. \ref{ex3}.}
In math, we want to show that there exists a finite horizon $H_{q_D}$ and at least one NN controller $\xi\in\Xi_{q_D\rightarrow q_D'}$, so that if the system evolves as per $\bbx_{t+1}=\bbf(\bbx_t,\xi)$ then the following two conditions hold for all possible initial system states in $\ccalX_{q_D}^0$: (i) $\bbx_{t} \in \ccalX_{q_D\rightarrow q_D}, \forall t\in[0,H_{q_D}-1]$ and (ii) $\bbx_{H_{q_D}}\in\ccalX_{q_D\rightarrow q_D'}$.
%
%\begin{equation}\label{eq:condI}
 %\bbx_{t} \in \ccalX_{q_D\rightarrow q_D}, \forall t\in[0,H_{q_D}-1] %\text{~and~} 
%\end{equation}
%\begin{equation}\label{eq:condII}
%\bbx_{H_{q_D}}\in\ccalX_{q_D\rightarrow q_D'}.
%\end{equation}
%for all possible initial system states $\bbx_0\in\ccalX_{q_D}^0$  and it evolves as per $\bbx_{t+1}=\bbf(\bbx_t,\xi)$.
%
If such a horizon $H_{q_D}$ and controller $\xi\in\Xi_{q_D\rightarrow q_D'}$ exist, then by definition of the set $\ccalX_{q_D\rightarrow q_D'}$ in \eqref{eq:X}, we have that within the time interval $[0,H_{q_D}-1]$, the transition from $q_D$ to $q_D$ (self-loop) is enabled, and at the time step $t=H_{q_D}$ the transition from $q_D$ to $q_D'$ occurs. In this case, we say that the DFA transition from $q_D$ to $q_D'$ is \textit{verified to be safe} when the system starts anywhere within $\ccalX_{q_D}^0$ and applies the NN controller $\xi$. %If for all available controllers $\xi\in\Xi_{q_D\rightarrow q_D'}$, there exists at least one initial state within $\ccalX_{q_D}^0$ for which either of the above conditions does not hold, then we say that the DFA transition is \textit{verified to be unsafe}.

To reason about safety of a DFA transition, we leverage existing reachability analysis tools that can compute forward reachable sets $\mathfrak{R}_t(\ccalX_{q_D}^0,\xi)$ collecting all possible states $\bbx$ that the system may reach after applying a feedback NN controller $\xi$ for $t$ time steps while starting anywhere in $\ccalX_{q_D}^0$. Given such reachable sets, it suffices to check if there exists a finite horizon $H_{q_D}$ and at least one controller $\xi\in\Xi_{q_D\rightarrow q_D'}$ such that the reachable sets satisfy the following two conditions: (i) $\mathfrak{R}_{t}(\ccalX_{q_D}^0,\xi)\subseteq \ccalX_{q_D\rightarrow q_D}, \forall {t}\in[0,\dots,H_{q_D}-1]$ and (ii) $\mathfrak{R}_{H_{q_D}}(\ccalX_{q_D}^0,\xi)\subseteq \ccalX_{q_D\rightarrow q_D'}.
$
%\begin{equation}\label{eq:reachi}
%\mathfrak{R}_{t}(\ccalX_{q_D}^0,\xi)\subseteq \ccalX_{q_D\rightarrow q_D}, \forall {t}\in[0,\dots,H_{q_D}-1]
%\end{equation}
%\begin{equation}\label{eq:reachii}
%\mathfrak{R}_{H_{q_D}}(\ccalX_{q_D}^0,\xi)\subseteq \ccalX_{q_D\rightarrow q_D'}.
%\end{equation}
If both conditions hold, we verify that the DFA transition from $q_D$ to $q_D'$ is \textit{safe} given the initial set of states $\ccalX_{q_D}^0$ and the controller $\xi$\cite{huang2019reachnn}. %In any other case, the DFA transition from $q_D$ to $q_D'$ is called \textit{unsafe}. 

Construction of exact reachable sets is computationally intractable.
%\footnote{\textcolor{red}{If you can add a sentence here briefly explaining why, that'd be great; read related papers for this}}. 
Thus, instead, we compute over-approximated reachable sets, denoted hereafter by  $\bar{\mathfrak{R}}_t(\ccalX_{q_D}^0,\xi)\subseteq\mathfrak{R}_t(\ccalX_{q_D}^0,\xi)$, using tools that can handle systems of the form (\ref{eq:dynamics}) with NN controllers \cite{Hu2020ReachSDPRA}. If the following two conditions are satisfied
\begin{equation}\label{eq:reachi2}
\bar{\mathfrak{R}}_{t}(\ccalX_{q_D}^0, \xi)\subseteq \ccalX_{q_D\rightarrow q_D}, \forall {t}\in[0,\dots,H_{q_D}-1]
\end{equation}
\begin{equation}\label{eq:reachii2}
\bar{\mathfrak{R}}_{H_{q_D}}(\ccalX_{q_D}^0,\xi)\subseteq \ccalX_{q_D\rightarrow q_D'},
\end{equation}
then we say that the considered DFA transition is verified to be safe given the initial set of states $\ccalX_{q_D}^0$ and a feedback NN controller $\xi$.\footnote{In practice, reachable sets over a large enough horizon $\bar{H}$ are computed. If there is not reachable set $\bar{\mathfrak{R}}_t$, for some $t\in\{0,\dots,\bar{H}\}$ that satisfies \eqref{eq:reachii2}, then we say that the system fails to reach this region of interest. This is in accordance with related works; see e.g., \cite{Hu2020ReachSDPRA, huang2019reachnn}.} Finally, if there is no self-loop for $q_D$, then $\ccalX_{q_D\rightarrow q_D}$ cannot be defined. In this case, such a transition from $q_D$ to $q_D'$ is verified to be safe if \eqref{eq:reachii2} holds for $H_{q_D}=1$.

%To account for the over-approximation error in the reachable sets, a DFA transition is called unsafe only if at least one of the following two conditions hold. The first condition requires the whole reachable set $\bar{\mathfrak{R}}_{t}(\ccalX_{q_D}^0,\xi)$ to be outside $\ccalX_{q_D\rightarrow q_D}$ (i.e., $\bar{\mathfrak{R}}_{t}(\ccalX_{q_D}^0,\xi)\cap\ccalX_{q_D\rightarrow q_D}=\emptyset$) for some $t\in[t_{q_D}^0,\dots,H_{q_D}-1]$ and for all $\xi\in\Xi_{q_D\rightarrow q_D'}$, i.e.,  The second condition requires the system to never reach the set $\ccalX_{q_D\rightarrow q_D}$, i.e., $\bar{\mathfrak{R}}_t(\ccalX_{q_D}^0,\xi)\cap \ccalX_{q_D\rightarrow q_D'}=\emptyset, \forall t\geq t_{q_D}^0.$ In any other case, we cannot reason about safety of the DFA transition due to the over-approximated reachable sets; for instance, consider a case where it holds that there exists a time instant $t$ such that  $\bar{\mathfrak{R}}_{t}(\ccalX_{q_D}^0,\xi)\cap\ccalX_{q_D\rightarrow q_D}\neq\emptyset$ and $\bar{\mathfrak{R}}_{t}(\ccalX_{q_D}^0,\xi)\cap\{\ccalX\setminus\ccalX_{q_D\rightarrow q_D}\}\neq\emptyset$.\footnote{\textcolor{red}{Jun, as we discussed last time, please check the literature to see if this terminology is consistent with other papers}} 

%\vspace{-0.1cm}
\begin{ex}[Verifying Reach-Avoid Properties (cont)]\label{ex3}
%\textcolor{red}{preferably, use inline equations here and use the notations I just introduced above}
Consider the DFA in Fig. \ref{fig:example} and, specifically, the transition from $q_D^0$ to $q_D^1$. To reach $q_D^1$ from $q_D^0$, the available controllers are $\Xi_{q_D^0\rightarrow q_D^1}=\{\xi_1\}$ by construction of the set $\ccalX_{q_D^0\rightarrow q_D^1}$; see Ex. \ref{ex2} and Section \ref{sec:problem}. Let $\ccalX_{q_D^0}^0=\ccalX_0$. Then, we compute reachable sets $\bar{\mathfrak{R}}_t(\ccalX_0,\xi_1)$; see Fig. \ref{fig:reach}. Observe that $H_{q_D^0}=3$, since the sets $\bar{\mathfrak{R}}_t(\ccalX_0,\xi_1)$ for $t=0,1,2$ satisfy \eqref{eq:reachi2} and $\bar{\mathfrak{R}}_3(\ccalX_0,\xi_1)$ satisfies \eqref{eq:reachii2}. Thus, the transition from $q_D^0$ to $q_D^1$ is verified to be safe. For the transition from $q_D^1$ to $q_D^F$, we have that $\Xi_{q_D^1\rightarrow q_D^F}=\{\xi_2\}$. Verification of this transition %requires the computation of reachable sets $\bar{\mathfrak{R}}_t(\ccalX_{q_D^1}^0,\xi_2)$ shown in Fig. \ref{fig:reach}; construction of $\ccalX_{q_D^1}^0$ 
will be discussed in Ex. \ref{ex4}. Finally, we have $\Xi_{q_D^0\rightarrow q_D^0}=\emptyset$ and $\Xi_{q_D^1\rightarrow q_D^1}=\{\xi_1\}$.
\end{ex}
%\begin{rem}[Reachability Analysis]

%\end{rem}
%Hence some techniques alter the goal to find the outer-approximation set and serve as the superset of the exact reachable set: $\bar{\mathfrak{R}}_t(\ccalX_{q_D})\supseteq \mathfrak{R}_t(\ccalX_{q_D})$:
%\begin{equation}\label{eq:reachi}
%\bar{\mathfrak{R}}_{t'}(\ccalX_{q_D}^0)\subseteq \ccalX_{q_D\rightarrow q_D}, %\forall {t'}\in[t,\dots,H-1]
%\end{equation}
%\begin{equation}\label{eq:reachii}
%\bar{\mathfrak{R}}_H(\ccalX_{q_D}^0)\subseteq \ccalX_{q_D\rightarrow q_D'}.
%\end{equation}

%Multiple reachability analysis tools have been proposed \textcolor{red}{[ADD REF]}, our proposed framework is applicable to any tool. In this paper, we choose the Reach-SDP as our toolbox.

%\begin{equation}\label{eq:UnsafeI}
%\mathfrak{R}_{t}(\ccalX_{q_D}^0)\cap\ccalX_{q_D\rightarrow q_D}=\emptyset, \text{for any} {t}\in[0,\dots,H_{q_D}-1]
%\end{equation}
%\begin{equation}\label{eq:UnsafeII}
%\mathfrak{R}_H(\ccalX_{q_D}^0)\subseteq \ccalX_{q_D\rightarrow q_D'}.
%\end{equation}

\vspace{-0.1cm}
\subsection{From Verification of Reach-Avoid Properties to Verification of LTL formulas}\label{sec:vltl}
\vspace{-0.1cm}

To verify that the system satisfies $\phi$, it suffices to check that the final DFA state can be reached from the initial state by enabling a sequence of  DFA transitions that are verified to be safe; see also Section \ref{sec:DFA}. To check this we rely on applying graph-search methods over the DFA state while verifying on-the-fly safety of DFA transitions using reachability analysis; see Section \ref{sec:VerReach}. %\footnote{\textcolor{red}{once the algorithm is done, please add references to its lines}}
Specifically, first we view the DFA as a directed graph $D=\{\ccalV, \ccalE\}$ with vertices $\ccalV$ and edges $\ccalE$ that are determined by the set of states and transitions of the DFA. As discussed in Section \ref{sec:VerReach}, to verify safety of DFA transition, an initial set of states is needed denoted by $\ccalX_{q_D}^0$. This set captures all possible states in $\ccalX$ that the system may have when it reaches a DFA state $q_D$. As a result, $\ccalX_{q_D}^0$ depends on the previous DFA states that the system has gone through to reach $q_D$. To simplify the proposed algorithm, we pre-process $D$ so that each node in $D$ that can be reached through multiple paths (excluding self-loops) originating from $q_D^0$ is replicated so that each replica can be reached through a unique path (excluding self-loops). %\footnote{Note we exclude self-loops since the system is}
For each vertex $q_D\in \ccalV$, we define sets that collect its incoming and outgoing edges denoted by $\ccalE_{q_D}^{\text{in}}$ and $\ccalE_{q_D}^{\text{out}}$, respectively.
If the number of incoming edges (excluding self-loops) for $q_D$ is greater than $1$, i.e., $|\ccalE_{q_D}^{\text{in}}|>1$ we create $|\ccalE_{q_D}^{\text{in}}|$ copies of $q_D$ denoted by $q_D^i$. Each node $q_D^i$ has only one incoming edge which is selected to be the $i$-th edge in $\ccalE_{q_D}^{\text{in}}$, denoted by $\ccalE_{q_D}^{\text{in}}(i)$, while its outgoing edges remain the same as in the original node $q_D$. Then we add all copies to the graph and remove the original nodes $q_D$. We denote the resulting graph by $D'$. %The output of Alg. \ref{algo2_pre} is the processed graph denoted by $D'$. 

%\footnotesize
\begin{algorithm}[t]
\footnotesize
\caption{Reach\_DFS Algorithm}\label{algo2}
\begin{algorithmic}[1]
\State \textbf{Input}: $D'$; NN controllers $\Xi$; System dynamics \eqref{eq:dynamics}
\State \textbf{Output}: Verification output  $R\in\{[\texttt{True},\boldsymbol\xi],\texttt{False}\}$
\State Initialize $q_D^{\text{cur}} \leftarrow q_D^0$; $\ccalX_{q_D^{\text{cur}}}^0=\ccalX_0$;  $\bbq=q_D^{\text{cur}}$; $\ccalV_{\text{vis}}=\{q_D^{\text{cur}}\}$;  $g(q_D^{\text{cur}})\leftarrow \ccalX_{q_D^{\text{cur}}}^0$;  $\boldsymbol\xi \leftarrow \emptyset$; $E=\texttt{False}$;
\While{$E \neq \texttt{True}$}
\State \text{Randomly select} $q_D^{\text{next}}\in \ccalR_{q_D^{\text{cur}}}$; $\ccalV_{\text{vis}} \leftarrow \ccalV_{\text{vis}} \cup q_D^{\text{next}}$
\State Compute set of available controllers $\Xi_{q_D^{\text{cur}}\rightarrow q_D^{\text{cur}}}$\; 
\If {$\exists$ $H_{q_D^{\text{cur}}}$ and $\xi\in\Xi_{q_D^{\text{cur}}\rightarrow q_D^{\text{next}}}$ for \eqref{eq:reachi2}-\eqref{eq:reachii2}}
\State $\bbq=\bbq|q_D^{\text{cur}}$; $\boldsymbol\xi\leftarrow \boldsymbol\xi| \xi$; $\ccalX_{q_D^{\text{next}}}^0\leftarrow \bar{\mathfrak{R}}_{H_{q_D^{\text{cur}}}}(q_D^{\text{cur}})$;
\State   $q_D^{\text{cur}} \leftarrow q_D^{\text{next}}$;
$g(q_D^{\text{next}})\leftarrow \ccalX_{q_D^{\text{next}}}^0$;
\If {$q_D^{\text{cur}}=q_D^F$}
\State $R = [\texttt{True},\boldsymbol\xi]$;  $E= \texttt{True}$;
\EndIf
\Else
\State $\ccalR_{q_D^{\text{cur}}} \leftarrow \ccalR_{q_D^{\text{cur}}} \setminus q_D^{\text{next}}$
\While {$\ccalR_{q_D^{\text{cur}}}\setminus \ccalV_{\text{vis}}  = \emptyset \wedge \bbq \neq \emptyset$}
\State $q_D^{\text{cur}} \leftarrow \bbq(\text{end})$; $\ccalX_{q_D^{\text{cur}}}^0\leftarrow g(q_D^{\text{cur}})$;
\State $\bbq\leftarrow \bbq \setminus \bbq(\text{end})$; $\boldsymbol\xi\leftarrow \boldsymbol\xi\setminus \boldsymbol\xi(\text{end})$
\EndWhile
\If{$\ccalR_{q_D^{\text{cur}}}\setminus \ccalV_{\text{vis}}  = \emptyset\wedge \bbq = \emptyset $}
\State $R\leftarrow \texttt{False}$; $E\leftarrow \texttt{True}$;
\EndIf
%\State Randomly Select $q_D^{\text{next}} \in \ccalR_{q_D^{\text{cur}}}$;
\EndIf
\EndWhile
\end{algorithmic}
\end{algorithm}
\normalsize

Next, we apply a Depth-first search (DFS) method over  $D'$ to see if $q_D^F$ can be reached from $q_D^0$ through a sequence of DFA transitions that are verified to be safe. This process is summarized in Alg. \ref{algo2}.
%To check if this is possible, we view the pruned DFA as a directed graph and we apply a Depth-first search (DFS) method over it. this process is summarized in Alg. \ref{algo2}
%
The inputs to this algorithm are the graph $D'$, the set $\Xi$ of NN controllers and the system dynamics \eqref{eq:dynamics} (line 1). In what follows, we denote by $q_D^{\text{cur}}$ the currently visited node in $D'$. The set of all possible states in $\ccalX$ that the system can be when it reaches $q_D^{\text{cur}}$ is denoted by $\ccalX_{q_D^{\text{cur}}}^0$. We initialize $q_D^{\text{cur}}$ as $q_D^{\text{cur}}=q_D^0$ and $\ccalX_{q_D^{\text{cur}}}^0=\ccalX_0$. Also, we define a set $\ccalV_{\text{vis}}$ that collects all nodes in $D'$ that have been visited and a sequence $\bbq$ of nodes that points to the current path from $q_D^0$ towards $q_F$. They are initialized as $\ccalV_{\text{vis}}=\{q_D^0\}$ and $\bbq=q_D^0$. Also we initialize the control strategy $\boldsymbol\xi$ as an empty sequence. We also define a function $g:\ccalV\rightarrow \ccalX$ that maps a DFA state $q_D\in\ccalV$ to the corresponding set $\ccalX_{q_D}^0$; this function that is constructed on-the-fly is need only as way store and recover from memory the sets $\ccalX_{q_D}^0$ (lines 2-3). 
Given  $q_D^{\text{cur}}$, we randomly select a next state $q_{\text{next}}\in \ccalR_{q_D^{\text{cur}}}$. Then we apply reachability analysis over the transition from $q_D^{\text{cur}}$ to $q_D^{\text{next}}$ using the NN controllers $\Xi_{q_D^{\text{cur}}\rightarrow q_D^{\text{next}}}\subseteq\Xi$ and the initial set of states $\ccalX_{q_D^{\text{cur}}}^0$ (lines 5-6); see Section \ref{sec:VerReach}. 
If the transition is verified to be safe, then we append $q_D^{\text{cur}}$ to $\bbq$. Also, we append the controller $\xi\in\Xi_{q_D^{\text{cur}}\rightarrow q_D^{\text{next}}}$ for which this transition is safe to $\boldsymbol\xi$ that denotes the current control strategy to reach $q_D^{\text{next}}$ from $q_D^0$ (lines 7-8). The corresponding horizon $H_{q_D^\text{cur}}$ should be stored as well; we abstain from this for simplicity of presentation.
%Set $\bbq$ will collect $q_D^{\text{cur}}$ to record possible feasible path from $q_D^0$ to $q_D^F$. 
The final reachable set $\bar{\mathfrak{R}}_{H_{q_D^{\text{cur}}}}(q_D^{\text{cur}},\xi)$ becomes the set $\ccalX_{q_D^{\text{next}}}^0$; see Ex. \ref{ex4} and Fig. \ref{fig:reach} as well (line 8). Also, $g(q_D^{\text{next}})$ is constructed on-the-fly as $g(q_D^{\text{next}})=\ccalX_{q_D^{\text{next}}}^0$ and we replace $q_D^{\text{cur}}$ with $q_D^{\text{next}}$ (line 9).
If the transition $q_D^{\text{cur}}\rightarrow q_D^{\text{next}}$ is not verified to be safe, we remove $q_D^{\text{next}}$ from the set $\ccalR_{q_D^{\text{cur}}}$  (see \eqref{eq:R}). Then we keep taking out the last element in $\bbq$ and $\boldsymbol\xi$, denoted by $\bbq(\text{end})$ and $\boldsymbol\xi(\text{end})$, while $\bbq(\text{end})$ is assigned to $q_D^{\text{cur}}$ until we find another state in $\ccalR_{q_D^{\text{cur}}}$ that has not been not visited yet (lines 12-16). 
%\footnote{\textcolor{red}{This is not clear. When do you stop? I.e., what does $q_D^{\text{cur}}$ have to satisfy to stop this? Explain this in words.}} 
The above process is repeated until $q_D^{\text{cur}}$ is updated to be $q_D^F$. In this case, we have found a NN control strategy $\boldsymbol\xi$ that satisfies $\phi$ for all initial states $\bbx_0\in\ccalX_0$ (line 10-11). If $\bbq$ is empty yet we fail to find another state in $\ccalR_{q_D^{\text{cur}}}$ that is not visited, then the proposed method cannot find a feasible path from $q_D^0$ to $q_D^F$ (even though it may exist; see Section \ref{sec:complete}) (lines 17-18).
We note that any other graph-search method in conjunction with reachability analysis can be used as well.

\begin{ex}[LTL Verification (cont)]\label{ex4}
%\textcolor{red}{preferably, use inline equations here and use the notations I just introduced above}
%\textcolor{red}{The text should be updated based on the Figure \ref{fig:reach}; see all previous examples}
We continue Ex. \ref{ex3}; see also Fig. \ref{fig:reach}. To verify the DFA transition from $q_D^{\text{cur}}=q_D^1$ to $q_D^{\text{next}}=q_D^F$, we initialize $\ccalX_{q_D^1}^0=\bar{\mathfrak{R}}_3(\ccalX_0,\xi_1)$. 
By computing $\bar{\mathfrak{R}}_t(\ccalX_{q_D^1}^0,\xi_2)$, we verify that this DFA transition is safe. Thus, there exists $\boldsymbol\xi=\xi_1,\xi_2$ where $\xi_1$ and $\xi_2$ are applied for $3$ and $2$ time steps so that $\bbf_{\xi}\models\phi$, for all $\bbx_0\in\ccalX_0$. 
\end{ex}
\vspace{-0.2cm}
\subsection{Correctness \& Completeness}\label{sec:complete}
\vspace{-0.1cm}
%In the following proposition, we show correctness of Algorithm. \ref{algo1}. %, i.e., if there exists a NN control strategy $\boldsymbol\xi$ as defined in Problems \ref{pr1}-\ref{pr2}, then Alg. \ref{algo1} will find it.
\begin{prop}[Correctness]
% Alg. \ref{algo1} 
The proposed method is correct, i.e., the computed $\boldsymbol\xi$ solves Problem \ref{pr1}. 
\end{prop}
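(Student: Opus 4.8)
The plan is to show that whenever Algorithm \ref{algo2} terminates with $R=[\texttt{True},\boldsymbol\xi]$, the returned strategy satisfies $\bbf_\xi\models\phi$ for every $\bbx_0\in\ccalX_0$; the branch $R=\texttt{False}$ carries no obligation, since Problem \ref{pr1} only asks that a strategy be produced when one is found. First I would unwind the termination condition: the algorithm can exit with \texttt{True} only through lines 10--11, i.e., after $q_D^{\text{cur}}$ has been driven to $q_D^F$ along a path $\bbq=q_D^{(0)},q_D^{(1)},\dots,q_D^{(m)}$ in $D'$ with $q_D^{(0)}=q_D^0$ and $q_D^{(m)}=q_D^F$, where every hop $q_D^{(k)}\to q_D^{(k+1)}$ passed the test on line 7, so there exist a horizon $H_k:=H_{q_D^{(k)}}$ and a controller $\xi_k\in\Xi_{q_D^{(k)}\to q_D^{(k+1)}}$ for which the over-approximated reachable sets satisfy \eqref{eq:reachi2}--\eqref{eq:reachii2} (or, when $q_D^{(k)}$ has no self-loop, only \eqref{eq:reachii2} with $H_k=1$), and $\boldsymbol\xi=\xi_0,\xi_1,\dots,\xi_{m-1}$ is exactly the concatenation recorded on line 8.

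Next I would prove the key invariant by induction on $k$: for every $\bbx_0\in\ccalX_0$, the trajectory $\tau(\bbx_0)$ obtained by applying $\xi_0$ for $H_0$ steps, then $\xi_1$ for $H_1$ steps, and so on, satisfies (a) $\bbx_{S_k}\in\ccalX_{q_D^{(k)}}^0=g(q_D^{(k)})$, where $S_k:=\sum_{j<k}H_j$, and (b) the DFA $D$, reading $L(\bbx_0)\cdots L(\bbx_{S_k})$ from $q_D^0$, is in state $q_D^{(k)}$. The base case $k=0$ is immediate since $\ccalX_{q_D^{(0)}}^0=\ccalX_0$ and $D$ starts at $q_D^0$. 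For the step, I would use that the exact forward reachable set of the closed loop \eqref{eq:closedloop} under $\xi_k$ from $\ccalX_{q_D^{(k)}}^0$ is contained in its over-approximation, $\mathfrak{R}_t(\ccalX_{q_D^{(k)}}^0,\xi_k)\subseteq\bar{\mathfrak{R}}_t(\ccalX_{q_D^{(k)}}^0,\xi_k)$; hence \eqref{eq:reachi2} forces $\bbx_{S_k+t}\in\ccalX_{q_D^{(k)}\to q_D^{(k)}}$ for all $t\in[0,H_k-1]$ and \eqref{eq:reachii2} forces $\bbx_{S_{k+1}}\in\ccalX_{q_D^{(k)}\to q_D^{(k+1)}}$. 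By the definitions in \eqref{eq:X}, reading $L(\bbx_{S_k+t})$ keeps $D$ at $q_D^{(k)}$ via the self-loop for $t=0,\dots,H_k-1$, and reading $L(\bbx_{S_{k+1}})$ moves $D$ to $q_D^{(k+1)}$; moreover line 8 sets $\ccalX_{q_D^{(k+1)}}^0=\bar{\mathfrak{R}}_{H_k}(\ccalX_{q_D^{(k)}}^0,\xi_k)\supseteq\mathfrak{R}_{H_k}(\ccalX_{q_D^{(k)}}^0,\xi_k)\ni\bbx_{S_{k+1}}$, which re-establishes (a)--(b) at index $k+1$. The no-self-loop case is the special instance $H_k=1$ in which only the second claim is invoked.

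Instantiating the invariant at $k=m$ shows that for every $\bbx_0\in\ccalX_0$ the word $w=L(\bbx_0)L(\bbx_1)\cdots L(\bbx_F)$ with $F=S_m$ drives $D$ from $q_D^0$ to $q_D^{(m)}=q_D^F$, so $w$ is accepted and $\tau(\bbx_0)\models\phi$; since $\bbx_0\in\ccalX_0$ was arbitrary, $\bbf_\xi\models\phi$, which is precisely a solution to Problem \ref{pr1}. The only remaining gap is to connect paths in the pre-processed graph $D'$ with runs of the original DFA: since $D'$ is obtained from $D$ by duplicating nodes with multiple incoming edges while leaving each node's outgoing transitions and the acceptance condition unchanged, $D'$ and $D$ accept the same language, so an accepting path in $D'$ witnesses an accepting run in $D$. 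I expect the main obstacle to be making the coupling between the \emph{set-valued} reachability guarantees and the \emph{pointwise} evolution of an arbitrary trajectory fully rigorous: correctly propagating the initial set $\ccalX_{q_D}^0$ from one segment to the next, handling the self-loop versus no-self-loop branches and the horizon bookkeeping uniformly, and using that $\ccalX_{q_D\to q_D}$ is \emph{exactly} the preimage under $L$ of the self-loop symbols (per \eqref{eq:X}) so that "the reachable set stays in $\ccalX_{q_D\to q_D}$" genuinely yields a self-loop at every intermediate time step rather than merely avoiding the accepting/other transitions.
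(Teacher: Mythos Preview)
Your proposal is correct and follows the same underlying approach as the paper—namely, correctness ``by construction'' of Algorithm~\ref{algo2}—but you supply the rigorous details the paper omits. The paper's own proof is a two-sentence assertion that the constructed $\boldsymbol\xi$ yields trajectories satisfying $\phi$ for all $\bbx_0\in\ccalX_0$; your inductive invariant on the DFA path, the explicit use of $\mathfrak{R}_t\subseteq\bar{\mathfrak{R}}_t$ to pass from set-valued guarantees to pointwise trajectory containment, and the $D'$-versus-$D$ language-equivalence remark are exactly the ingredients needed to make that assertion precise, and none of them appear in the paper's argument. One minor bookkeeping quibble: in your inductive step you write that ``reading $L(\bbx_{S_k+t})$ keeps $D$ at $q_D^{(k)}$ via the self-loop for $t=0,\dots,H_k-1$,'' but at $t=0$ the symbol $L(\bbx_{S_k})$ is the one that \emph{enters} $q_D^{(k)}$ from $q_D^{(k-1)}$, not a self-loop; the self-loop readings should be $L(\bbx_{S_k+1}),\dots,L(\bbx_{S_{k+1}-1})$. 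This is an indexing off-by-one, not a gap in the logic.
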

\begin{proof}
%This result holds by construction of Alg. \ref{algo1}. Particularly, Alg. \ref{algo1} 
This result holds by construction of the proposed method. Particularly, we
ensure that the closed-loop system \eqref{eq:closedloop} driven by $\boldsymbol\xi=\mu(0),\dots,\mu(K)$, where each $\mu(k)$ is applied for $H_k$ time-steps, generates trajectories $\tau(\bbx_0)=\bbx_0,\dots,\bbx_F$, where $F=\sum_{t=0}^{H_k}$, that satisfy the system dynamics and the LTL formula $\phi$, $\forall \bbx_0\in\ccalX_0$. %This can be shown by induction. At $k=0$, the current DFA state is $q_D^0$ and the system applies  $\mu(0)$ for $H_0$ time steps. By construction of Alg. \ref{algo2} and due to the deterministic nature of the DFA, it holds that a unique DFA state will be reached after $H_{0}$ steps, denoted by $q_D^1$. Assume that at time $t'=\sum_{}$
%The controllers $\mu(k)$ are applied sequentially, where each $\mu(k)$ is applied for $H_k$ time steps. By construction of $\xi$, once $\mu(k)$ is applied for $H_k$ time steps a new DFA state will be reached, for any possible initial system state $\bbx_0\in\ccalX_0$. Then the controller $\mu(k+1)$ will be applied for $H_{k+1}$ steps until at $k=G$, the final automaton states.
%
%By construction of $\boldsymbol\xi$, once the system reaches a DFA state $q_D$, it selects
%the NN controller $\mu(k)$ is applied when the system reaches for a DFA state $q_D$
\end{proof}
\begin{rem}[Computational Efficiency vs Completeness]\label{rem:complete}
In general, our method is not complete in the sense that it may not find a control strategy $\boldsymbol\xi$ that satisfies $\phi$, for all $\bbx_0\in\ccalX_0$, even though such a strategy exists. This is due to the fact that (a) Alg. \ref{algo2} computes over-approximated reachable sets and that (b) it does not exhaustively search over all possible combinations of NN control actions that the system can apply when  a new DFA state is reached. 
As for (b), for instance, \textit{given a initial set of states for $q_D^{\text{cur}}$}, Alg. \ref{algo2} reasons about safety of a transition from $q_D^{\text{cur}}$ to $q_D^{\text{next}}$, using only controllers selected from $\Xi_{q_D^{\text{cur}}\rightarrow q_D^{\text{next}}}$. If this transition is unsafe, then it is discarded. However, this transition may become feasible if the initial set of states changes, which can happen by applying a controller from $\Xi_{q_D^{\text{cur}}\rightarrow q_D^{\text{cur}}}$ for some $\hat{H}$ time steps.
%However, it is possible that if the system applied a controller from  $\Xi_{q_D^{\text{cur}}\rightarrow q_D^{\text{cur}}}$ for some $\hat{H}>0$ time steps, so that it stays in $q_D^{\text{cur}}$, then the DFA transition from $q_D^{\text{cur}}$ to $q_D^{\text{next}}$ may switch to `safe' using controllers from $\Xi_{q_D^{\text{cur}}\rightarrow q_D^{\text{next}}}$. The reason is that in this case a new set of initial states is considered for verifying this DFA transition. By following such an approach, subsequent DFA transitions may switch from safe to unsafe and vice versa depending on $\hat{H}$. 
%Nevertheless, searching over all possible values of such horizons $\hat{H}$, for all DFA states, is computationally intractable. 
Additionally, as soon as Alg. \ref{algo2} finds a $\xi\in\Xi_{q_D^{\text{cur}}\rightarrow q_D^{\text{next}}}$ for which the corresponding transition is safe, it proceeds to new DFA transitions. However, that transition may be safe for other controllers in $\xi\in\Xi_{q_D^{\text{cur}}\rightarrow q_D^{\text{next}}}$ as well, where each one yields a different initial set for subsequent DFA transitions affecting their safety.
%as  a DFA transition is safe for some $\xi\in\Xi_{q_D^{\text{cur}}\rightarrow q_D^{\text{next}}}$, Alg. \ref{algo2} proceeds by considering only this $\xi_i$ and the corresponding initial set for $q_D^{\text{next}}$ which may affect safety of subsequent DFA transitions as well. 
%Alg. \ref{algo1} 
The proposed method can be extended to account for these additional control actions at the expense of increasing its computational cost. We note that such trade-offs are quite common in related works; see e.g., \cite{leahy2022fast}. % Alg. \ref{algo1} 
The proposed method is complete if (i) there are no self-loops in the DFA states, or if $\Xi_{q_D\rightarrow q_D}=\emptyset$ for all states (see e.g., Ex. \ref{ex3}); (ii) $|\Xi_{q_D\rightarrow q_D'}|=1$, for all $q_D\in\ccalQ_D\setminus\{q_D^F\}$; (iii) the reachable sets are accurately computed. %For instance, for the DFA in Fig. \ref{fig:example}, we have that  $\Xi_{q_D^0\rightarrow q_D^0}=\emptyset$ and $\Xi_{q_D^1\rightarrow q_D^1}=\{\xi_1\}$.
%that it can apply when in the currently visited DFA state $q_D^{\text{cur}}$, examines only the controllers $\Xi_{q_D^{\text{cur}}\rightarrow q_D^{\text{next}}}$ to reason about safety of a DFA transition from $q_D^{\text{cur}}$ to $q_D^{\text{next}}$
\end{rem}

%\textcolor{red}{[I need to add examples/figures to illustrate the above process - verifying single transition]}

\vspace{-0.4cm}
\section{Experiments}\label{sec:sim}
\vspace{-0.2cm}
%For both case studies, we train separate NN controllers for different regions of interest such that each time a robot receive the requirement from a sub-task, corresponding NN controller will be selected for the verification process.

%\textcolor{red}{[you also need to explain how you generate data]}
% \subsection{Double Integrator}
% We consider a double integrator system:

% \begin{equation}
% x_{t+1} = \underbrace{
% \begin{bmatrix}
% -0.5&0\\
% 0.1&-0.2
% \end{bmatrix}
% }_A x_t + \underbrace{
% \begin{bmatrix}
% 1&0\\
% 0&1
% \end{bmatrix}
% }_B u_t
% \end{equation}

% \begin{equation}
% u_t = -\underbrace{
% \begin{bmatrix}
% 0.3&0.3\\
% 0.2&0.2
% \end{bmatrix}
% }_{K_1} x_t + K_2 r
% \end{equation}

% with $r$ denoted as reference input of the system. Also we have $x_t\in \mathcal{R}^2$ and $u_t\in \mathcal{R}^2$. Let the equilibrium state of the system $X_{\text{EQL}}=r$ where $x_{\text{EQL}} = -(A-BK_1-I)^{-1}BK_2r$. Then we have $K_2 = -B^{-1}(A-BK_1-I)$. 

%\subsection{6D Quadrotor}

In this section, we demonstrate our framework on an unmanned aerial vehicle (UAV) with various LTL properties. %All simulations have been run on a computer with 16 GB RAM and GeForce RTX 3080 graphic card.%...\textcolor{red}{[briefly mention specifications-processor, ram, etc]}

\textbf{System Dynamics:} 
We consider a UAV with dynamics as in (\ref{eq:dynamics}) where the matrices $\bbA_t, \bbB_t,$ and $\bbc_t$ are defined as in \cite{Hu2020ReachSDPRA}.
% We consider a UAV with discrete-time dynamics of the form 
% $\bbx_{t+1}=\bbA\bbx_t+\bbB\bbu_t+\bbc$ operating in a  workspace $\Omega\subseteq\mathbb{R}^3$; definition of the matrices $\bbA, \bbB$, and $\bbc$ as well as the input constraints $\ccalU_t$ can be found in \cite{Hu2020ReachSDPRA}. 
%described in \cite{ARCH19_Verification_of_Closed_loop_Systems} and rewrite the nonlinear dynamics into \eqref{eq:quad}. The original nonlinear dynamics in \cite{ARCH19_Verification_of_Closed_loop_Systems} is discretized with a sampling rate $t_s=0.3s$ with Runge-Kutta 4th order method and applied as the prediction model in MPC.
%\footnote{\textcolor{red}{What do you mean by prediction model? I believe you wan to say that you have a continuous model borrowed from [31] which you have discretized using this sampling rate. This should be clear (if that's what you mean). If that's also the case then this should be discussed earlier, when we mention the dynamics}} 
The UAV state is defined as $\bbx=[p_x;p_y;p_z;v_x;v_y;v_z]\in\mathbb{R}^6$ capturing the position and velocity. %with constraint $\bbx \in [-5,5]\times[-5,5]\times[-5,5]\times[-1,1]\times[-1,1]\times[-1,1]$.
The control input $\bbu\in\mathbb{R}^3$ is a function of pitch, roll and thrust. 
%with constraints $\bbu \in [-\tan\frac{\pi}{9},\tan\frac{\pi}{9}]\times[-\tan\frac{\pi}{9},\tan\frac{\pi}{9}]\times[0,2g]$. %\footnote{\textcolor{red}{this notation is unclear. I would write separate constraints for each component}}.%\footnote{\textcolor{red}{What is $A^C$? Let's not introduce new notation in this section unless it is necessary}}.
%

\textbf{NN controllers:} In what follows, we define LTL specifications that require the UAV to visit certain disjoint regions of interest $\ell_i\in\mathbb{R}^3$. We highlight that the the regions $\ell_i$ are defined only over the UAV position, i.e., $\ell_i \subseteq\Omega$. Also, since we assume disjoint regions, the DFA can be pruned by removing infeasible DFA transitions reducing the computational cost for verification \cite{kantaros2018text}.  %\footnote{\textcolor{red}{Also, we need more details about how trained the NN. I am adding some but please fill in the details. Also, double check if what I have written is accurate}}. 
We train the NN controllers similarly to \cite{Hu2020ReachSDPRA}; nevertheless, any other method (e.g., RL) can be used to train them. Specifically, to train $\xi_i$, we leverage nonlinear Model Predictive Control (MPC) methods. First we generate a random set of states $\bbx\in\mathbb{R}^6$. Starting from each one of these states, we generate a sequence of pairs of states and control inputs that drive the UAV towards the interior of $\ell_i$ using an off-the-shelf MPC solver \cite{andrei2017sqp}. Each pair constitutes a data point in a training dataset. Using this dataset we train feedforward NN controllers $\xi_i$ with $2$ hidden layers, $30$ neurons/layer,  and ReLU activation functions. %\textcolor{red}{We emphasize that }% Our goal is to check whether given/trained NN controllers $\xi_i$ can be composed to satisfy an LTL task.} %The NNs have $2$ hidden layers and $30$ neurons in each layer. %Adam optimizer in PyTorch
%\footnote{\textcolor{red}{I believe it is the same structure across all case studies. Later in each case study mention the size of the training dataset as well. I believe this size is different across your case studies. Also, I still think we need to report some sort of accuracy on the training set for each NN in each case study.}} 
%The average training time of a NN $\xi_i$ across all case studies was 1 hour approximately.

%A nonlinear MPC using SNOPT [ADD REF] is implemented to generate data points to train different NN controllers. Neural network controllers were trained with ReLU activation function and Adam optimizer in PyTorch.

\begin{figure}[t]
%\footnotesize
\centering
\begin{subfigure}[t]{0.47\linewidth}
\centering
    \includegraphics[width=\linewidth]{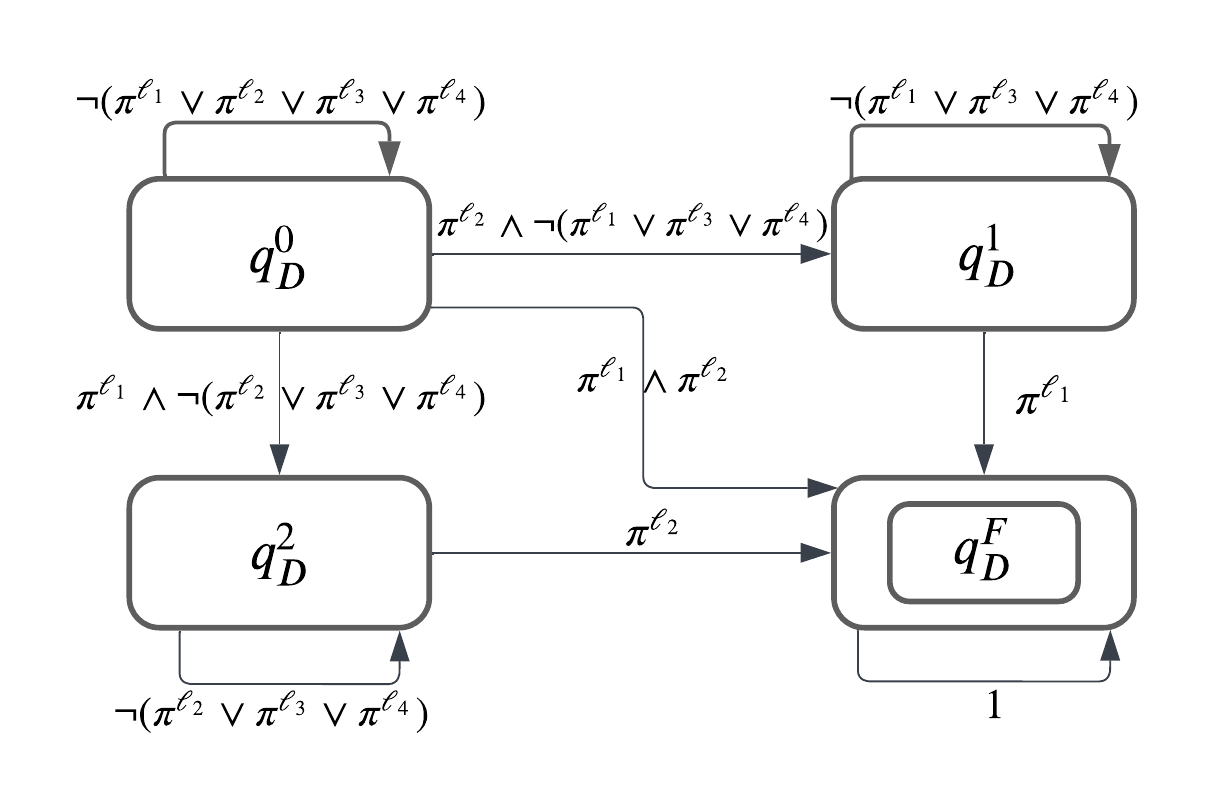}
    \caption{Case Study 1}
    \label{fig:first}
\end{subfigure}
\hspace{2em}%
\begin{subfigure}[t]{0.26\linewidth}
\centering
    \includegraphics[width=\linewidth]{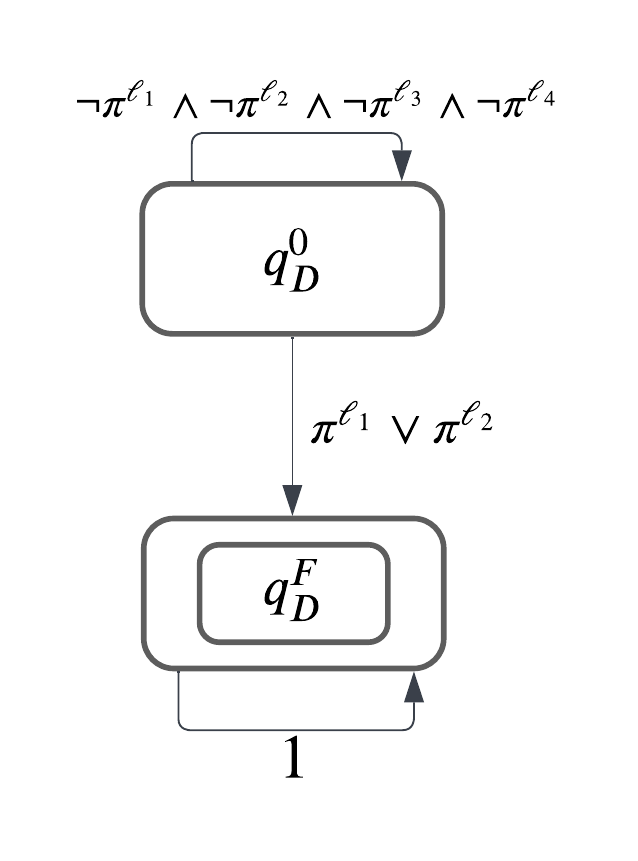}
    \caption{Case Study 2}
    \label{fig:second}
\end{subfigure}
\caption{DFA for the LTL formulas in case studies I \& II.}
\label{fig:case1}\vspace{-0.5cm}
\end{figure}
\normalsize

\textbf{Case Study I:} In this case study, we consider the following LTL formula $\phi=(\Diamond\pi^{\ell_1})\wedge (\Diamond\pi^{\ell_2}) \wedge (\neg (\pi^{\ell_3} \vee \pi^{\ell_4} )\ccalU \pi^{\ell_1}) \wedge (\neg (\pi^{\ell_3} \vee \pi^{\ell_4} )\ccalU \pi^{\ell_2})$ requiring the UAV to eventually visit the regions $\ell_1$ and $\ell_2$, in any order, while avoiding the obstacles $\ell_3$ and $\ell_4$. %The coordinates of these regions are..\footnote{\textcolor{red}{Let's add the following only if we have space}}
%$\ell_1 = [0.5, 1.5]\times[0.5, 1.5]\times[0.5, 1.5]$, $\ell_2=[1.5, 2.5]\times [1.5, 2.5]\times[ 1.5, 2.5]$, $\ell_4=[1.5, 1.7]\times[ 0.9, 1.1]\times[ 0, 1.8]$, $\ell_5 = [1.6, 1.8]\times[1.9, 2.0]\times[ 0, 1.8]$; while (b) has $\ell_5=[1.4, 1.7]\times[1.5, 1.8]\times[0, 1.8]$ w
%Figure ?? shows the location of these regions projected on the $x-y$ plane.%\footnote{\textcolor{red}{just in case we add such a figure}}. 
This formula corresponds to the DFA shown in Fig. \ref{fig:first}; notice that the transition from $q_D^0$ to $q_D^{F}$ was pruned and, therefore, not considered during verification since it requires the UAV to be present in more than one region simultaneously. %\footnote{\textcolor{red}{you need to say which one is pruned. Also, please re-draw the automaton. All transitions starting from the initial state seem to be infeasible which is wrong. Please be careful with these things so that we avoid doing the same work twice. Also, it looks a bit sloppy to use regions $\ell_1,\ell_2,\ell_3,\ell_5$. What happened to $\ell_3$? I would suggest you rename them as $\ell_1,\ell_2,\ell_3,\ell_4$. We can fix this at the end. Not a big priority for now}} %\footnote{\textcolor{red}{check the automaton on that online tool}}
%
%\footnote{\textcolor{red}{ I would like to see here their accuracy on their training dataset if possible (ignoring obstacles)}}. 
%
To train $\xi_1$ and $\xi_2$ we collected $16000$ datapoints for each controller. Given these trained NN controllers and an initial set $\ccalX_0$ of states defined as an ellipsoid centered at $[3.5;3.5;2.9]$ with shape matrix of $\text{diag}[0.5^2;0.5^2;0.5^2]$.
%\footnote{\textcolor{red}{the center is not enough to define an ellipsoid. What about the radii? We can fix this at the end. Not a big priority for now}}%\footnote{\textcolor{red}{It is not clear what these numbers mean with respect to an ellipsoid. You should explain this.}}
We check if there exists a sequence of control actions that satisfy $\phi$. Particularly,  first it investigates the DFA transition from $q_D^0$ to $q_D^1$. This transition requires the UAV to stay within the obstacle-free space (i.e., avoid the obstacles $\ell_3$ and $\ell_4$) and  eventually reach $\ell_2$. The corresponding reachable sets for this DFA transition are shown in Figure \ref{fig:c1first}. Notice that the reachable sets $\bar{\mathfrak{R}}_t(\ccalX_0,\xi_2)$ are fully outside the obstacle regions for $t=1,2$ while at $t=3$ the corresponding reachable set is fully inside $\ell_2$. Thus, this DFA transition is verified to be safe. Next, the DFA transition from $q_D^1$ to $q_D^F$ is considered requiring the robot to reach $\ell_1$ while avoiding the obstacle regions. The set of initial states for this transition is $\ccalX_{q_D^1}^0=\bar{\mathfrak{R}}_3(\ccalX_0,\xi_2)$. After computing reachable sets $\bar{\mathfrak{R}}_t(\ccalX_{q_D^1}^0,\xi_1)$ (not shown), we can see that for $t=1,2$ they are outside the obstacles while $\bar{\mathfrak{R}}_3(\ccalX_{q_D^1}^0,\xi_1)$ is inside $\ell_1$ verifying that this DFA transition is also safe. Thus, there exists a control strategy $\boldsymbol\xi=\xi_2,\xi_1$ where $H_1=H_2=3$ such that $\bbf_{\xi}\models\phi$ for all $\bbx_0\in\ccalX_0$.

\begin{figure}[t]
%\footnotesize
\centering
\begin{subfigure}[t]{0.45\linewidth}
    \includegraphics[width=\linewidth]{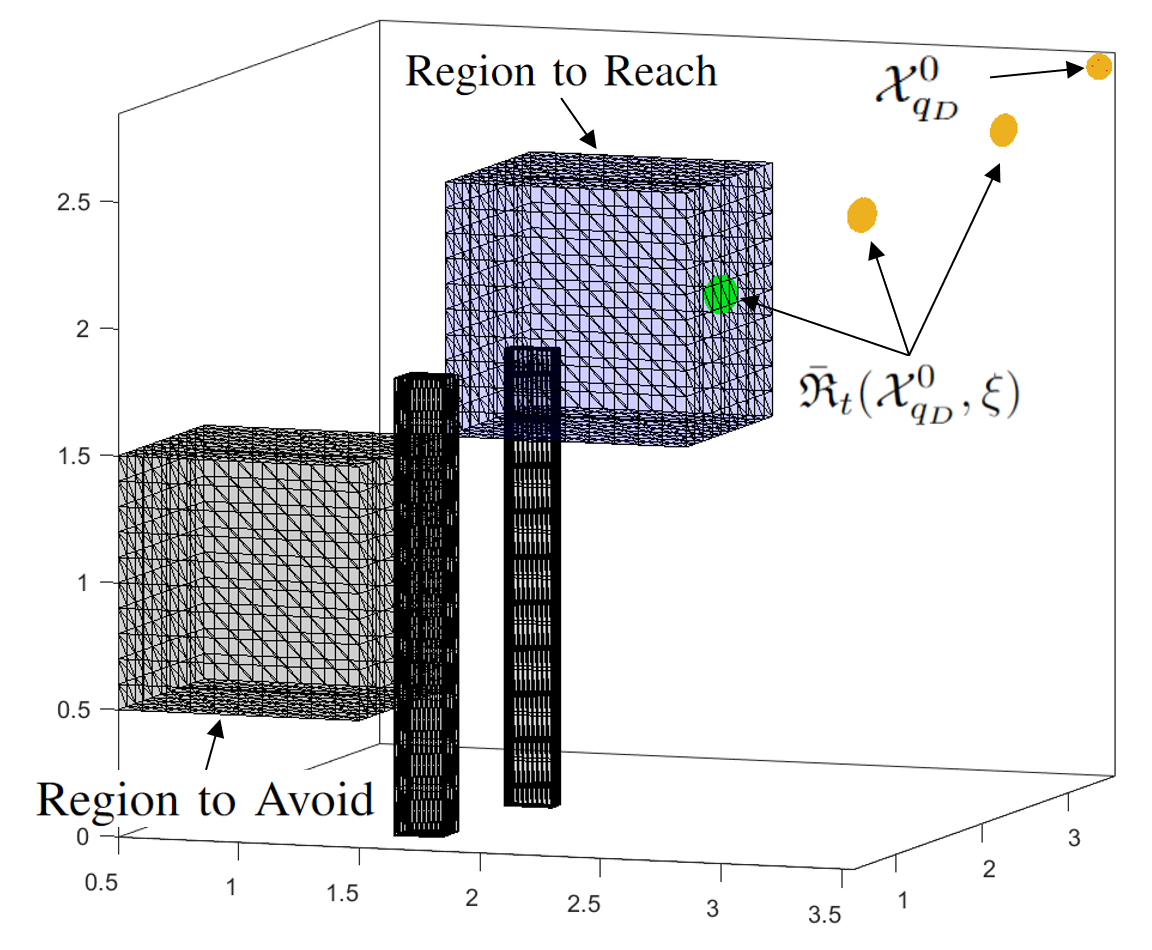}%\vspace{-1mm}
    % \caption{$\neg\pi^{\ell_1}\wedge\pi^{\ell_2}\wedge\neg\pi^{\ell_4}\wedge\pi^{\ell_5}$}
    \caption{Case I: $q_D^0\rightarrow q_D^1$}\vspace{-0.1cm}
    \label{fig:c1first}
\end{subfigure}
\hspace{2em}%
\begin{subfigure}[t]{0.38\linewidth}
        \includegraphics[width=\linewidth]{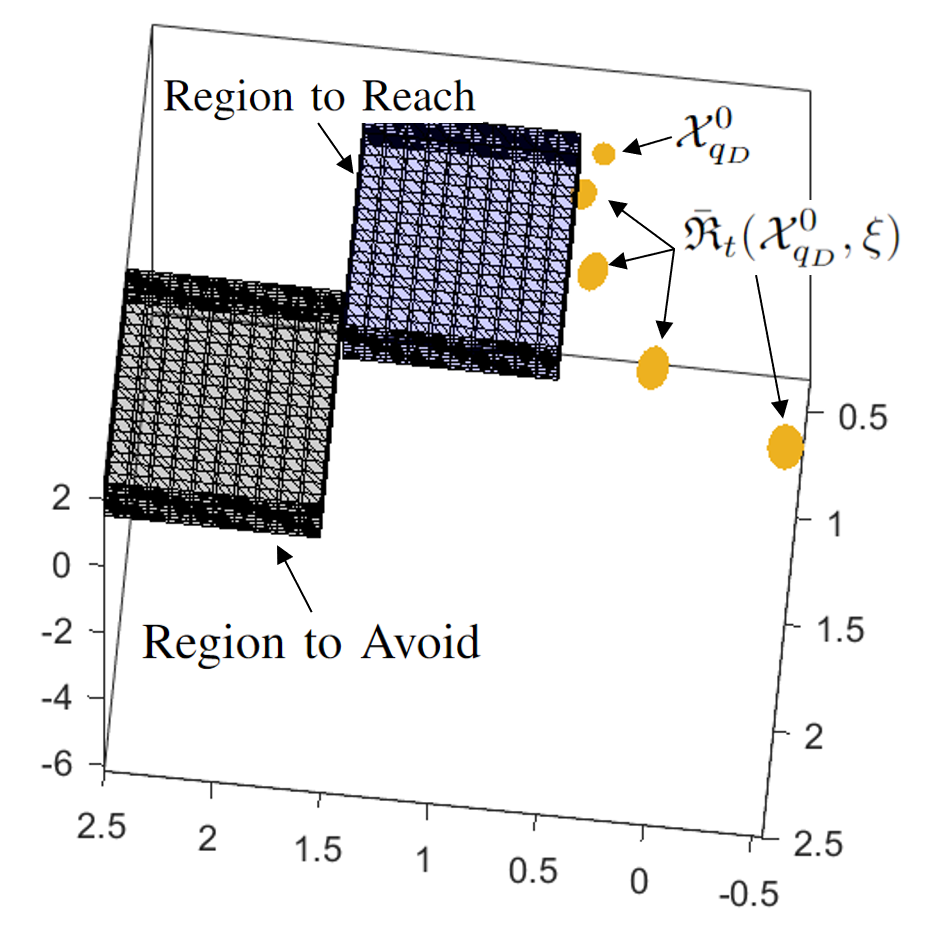}%\vspace{-1mm}
    \caption{Case II: $q_D^0\rightarrow q_D^F$}\vspace{-0.1cm}
    \label{fig:c2first}
\end{subfigure}
\caption{Verification of a DFA transition for the case studies I and II. 
The regions of interest that need to be avoided/reached to enable this DFA transition are shown with black/blue colour. All other regions are not shown. All reachable sets that are outside/inside the regions to reach are shown with orange/green color.} %If a reachable set is inside the region to reach, it is illustrated with green color.}
\label{fig:case1}\vspace{-0.8cm}
\end{figure}
\normalsize

\textbf{Case Study II:} In this case study, we consider an LTL formula $\phi = \Diamond (\pi^{\ell_1} \vee \pi^{\ell_2})\wedge ( \neg (\pi^{\ell_3} \vee \pi^{\ell_4}) \ccalU (\pi^{\ell_1} \vee \pi^{\ell_2}))$ requiring the UAV to eventually visit either region $\ell_1$ or $\ell_2$, while avoiding obstacles $\ell_3$ and $\ell_4$. This formula corresponds to the DFA shown in Figure \ref{fig:second}. The NN controller $\xi_2$ is the same as in the previous case study while $\xi_1$ was trained using only $100$ datapoints to mimic a poorly designed controller. The initial set of states $\ccalX_0$ is defined as an ellipsoid centered at $[0.4;0.4;0.3]$ with shape matrix of $\text{diag}[0.5^2;0.5^2;0.5^2]$.
Given the DFA, we investigate the transition from $q_D^0$ to $q_D^F$ which requires the UAV to reach either $\ell_1$ or $\ell_2$ while avoiding both $\ell_3$ and $\ell_4$. In other words, we have that $\ccalX_{q_D^0\rightarrow q_D^0}=\{\Omega\setminus(\ell_3\cup\ell_4)\}$ and $\ccalX_{q_D^0\rightarrow q_D^F}=\{\ell_1\cup\ell_2\}$. Thus, we have that $\Xi_{q_D^0\rightarrow q_D^F}=\{\xi_1,\xi_2\}$. First, we check if this DFA transition is safe using  $\xi_1$. The generated reachable sets are shown in Figure \ref{fig:c2first}; only the first $4$ reachable sets are shown. We observed that $\ell_1$ could not be reached within a large enough number of iterations. Thus, we cannot reason about safety of this DFA transition under $\xi_1$. Thus, next we consider the controller $\xi_2$ and we constructed the sets $\bar{\mathfrak{R}}_t(\ccalX_0,\xi_2)$ (not shown) demonstrating that all obstacles are avoided while at $t=6$ the the reachable set is fully inside $\ell_2$. Thus, we verify that there exists $\boldsymbol\xi=\xi_2$, where $\xi_2$ is applied for $6$ steps, so that $\bbf_{\xi}\models\phi$, for all $\bbx_0\in\ccalX_0$.
%The generated ellipsoids for DFA transition now requires UAV to reach $\ell_1$ while avoiding $\ell_3$ and $\ell_4$ in Figure \ref{fig:c2first}. Notice that $\xi_1$ is a poorly trained NN controller, hence this transition cannot be verified as safe. But with the alternative to reach $\ell_2$, we can still verify this LTL formula to be success in Figure \ref{fig:c2second}

\begin{figure}[t]
\centering
\includegraphics[width=0.5\linewidth]{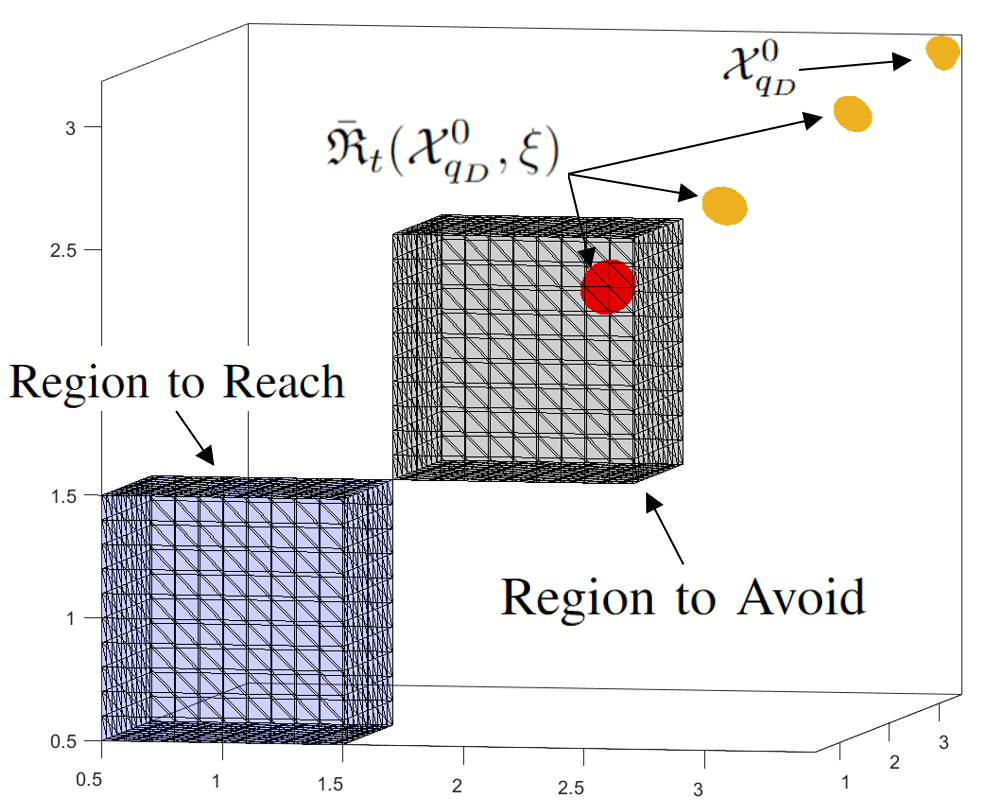}\vspace{-2mm}
\caption{Case Study III: Verification of the DFA transition from $q_D^0$ to $q_D^1$. The red ellipsoid corresponds to a reachable set being fully inside the region to avoid. }
\label{fig:case3}\vspace{-0.7cm}
\end{figure}
\normalsize

\textbf{Case Study III:} We revisit the LTL formula considered in Ex. \ref{ex2}-\ref{ex4} with DFA shown in Fig. \ref{fig:example}. The controllers $\xi_1$ and $\xi_2$ are the same as in case study I while $\ccalX_0$ is defined as an ellipsoid centered at $[3.4;3.4;3.1]$ with shape matrix of $\text{diag}[0.5^2;0.5^2;0.5^2]$. To check whether the DFA transition from $q_D^0$ to $q_D^1$ is safe, we check if the system can reach $\ell_1$ while avoiding $\ell_2$. By constructing the reachable sets, we notice that $\ell_1$ cannot be reached without entering $\ell_2$ first; see Fig. \ref{fig:case3}. Thus, we cannot find a $\boldsymbol\xi$ for $\phi$; see Rem. \ref{rem:complete}. %Note that Alg. \ref{algo1} is complete for this specific LTL formula; see Remark \ref{rem:complete}.

%\textcolor{red}{ I would suggest the following. Ideally, we need to show the DFA so that the reader can understand what `reach-avoid sub-tasks' we verify each time. But I am sure will not have space. So, I recommend we replace this LTL formula with the one we have considered in the example in the previous section for which we have already shown the DFA. This will save us some space. You will not need to re-train any NN but I believe you need to regenerate the figs. The reason is that going to $\ell_1$ requires to avoid $\ell_2$ which is not the case in the current LTL formula.}

\vspace{-0.3cm}
\section{Conclusion}
\vspace{-0.2cm}
This paper proposed a new method to design verified compositions of NN controllers for co-safe LTL tasks. We showed its efficiency in navigation tasks for aerial vehicles.

%\section*{APPENDIX}
%\section*{ACKNOWLEDGMENT}
\vspace{-0.25cm}
\bibliographystyle{IEEEtran}%{IEEEtran}
\bibliography{YK_bib.bib}
%\vspace{-0.9cm}

\end{document}